\newcommand{\R}{\mathbb{R}}
\newcommand{\Z}{\mathbb{Z}}
\newcommand{\N}{\mathbb{N}}
\newcommand{\E}{\mathbb{E}}
\newcommand\prox{\mathrm{prox}}
\newcommand\St{\mathrm{St}}
\newcommand{\tT}{\mathrm{T}}
\newcommand{\Lip}{\mathrm{Lip}}
\DeclareMathOperator*{\argmin}{arg\,min}
\theoremstyle{plain}
\newtheorem{lemma}{Lemma}
\newtheorem{theorem}[lemma]{Theorem}
\newtheorem{corollary}[lemma]{Corollary}
\newtheorem{proposition}[lemma]{Proposition}
\newtheorem{remark}[lemma]{Remark}
\theoremstyle{definition}
\begin{document}

\title{Proximal Residual Flows for Bayesian Inverse Problems}
\author{Johannes Hertrich\footnote{Institute of Mathematics,
TU Berlin,
Straße des 17.~Juni 136, 
 D-10623 Berlin, Germany,
j.hertrich@math.tu-berlin.de}}

\maketitle            

\begin{abstract}
Normalizing flows are a powerful tool
for generative modelling, density estimation and posterior reconstruction
in Bayesian inverse problems.
In this paper, we introduce proximal residual flows, a new architecture of normalizing flows.
Based on the fact, that proximal neural networks are by definition averaged operators, 
we ensure invertibility of certain residual blocks.
Moreover, we extend the architecture to conditional proximal residual flows for posterior reconstruction
within Bayesian inverse problems.
We demonstrate the performance of proximal residual flows on numerical examples.

\end{abstract}

\section{Introduction}

Generative models for approximating complicated and high-dimensional probability distributions gained increasingly attention over the last years.
One subclass of generative models are normalizing flows \cite{DKB2014,RM2015}.
They are learned diffeomorphisms which push forward a complicated probability
distribution to a simple one. More precisely, we learn a diffeomorphism $\mathcal T$ such that a distribution $P_X$ 
can be approximately represented as $\mathcal T^{-1}_\#P_Z$ for a simple distribution $P_Z$.
Several architectures of normalizing flows were proposed in the literature including Glow \cite{KD2018}, real NVP \cite{DSB2017}, continuous normalizing flows \cite{CRBD2018,GCBSD2018,OFLR2021} and autoregressive flows \cite{CTA2019,DBMP2019,HKLC2018,PPM2017}.
In this paper, we particularly focus on residual flows \cite{BGCDJ2019,CBDJ2019}.
Here, the basic idea is that residual neural networks \cite{HZRS2016} are invertible as long as each subnetwork has
a Lipschitz constant smaller than one.
In \cite{HCTC2020}, the authors figure out a relation between residual flows and Monge maps in optimal transport problems.
For training residual flows, one needs to control the Lipschitz constant of the considered subnetworks.
Training neural networks with a prescribed Lipschitz constant was addressed in several papers \cite{GFPC2021,MKKY2018,PRTW2021,SGL2018}.
For example, the authors of \cite{MKKY2018} propose to rescale the transition matrices after each optimization step such that 
the spectral norm is smaller or equal than one.
However, it is well known that enforcing a small Lipschitz constant within a neural network can lead
to limited expressiveness.

In this paper, we propose to overcome these limitations by using proximal neural networks (PNNs).
PNNs were introduced in \cite{HHNPSS2019,HNS2020} and are by construction averaged operators.
Using scaled PNNs as subnetworks, we prove that a residual neural network is invertible even if the scaled PNN has
a Lipschitz constant larger than one.
Further, we consider Bayesian inverse problems
$$
Y=F(X)+\eta,
$$
with an ill-posed forward operator $F$ and some noise $\eta$.
Here, we aim to reconstruct the posterior distributions $P_{X|Y=y}$ with using normalizing flows.
To this end, we apply a conditional generative model \cite{ALKRK2019,HHS2021,MO2014,SLY2015}.
For normalizing flows, this means that we aim to learn a mapping $\mathcal T(y,x)$ such that for any $y$ it holds
approximately $P_{X|Y=y}\approx \mathcal T(y,\cdot)^{-1}_\#P_Z$ for a simple distribution $P_Z$.
Further, we show how proximal residual flows can be used for conditional generative modeling by constructing
conditional proximal residual flows.
Finally, we demonstrate the power of (conditional) proximal residual flows by numerical examples. First, we use proximal residual flows
for sampling and density estimation of some complicated probability distributions 
including adversarial toy examples and molecular structures.
Afterwards, we apply conditional proximal residual flows for reconstructing the posterior distribution in an
inverse problem of scatterometry and for certain mixture models.

The paper is organized as follows.
In Section~\ref{sec_iResPNNs}, we first revisit residual flows and proximal neural networks. 
Afterwards we introduce proximal residual flows which combine both.
Then, we extend proximal residual flows to Bayesian inverse problems in Section~\ref{sec_conditional_iResPNNs}.
We demonstrate the performance of proximal residual flows in Section~\ref{sec_num}.
Conclusions are drawn in Section~\ref{sec_conc}.

\section{Proximal Residual Flows}\label{sec_iResPNNs}

Given i.i.d.~samples $x_1,...,x_N$ from an $n$-dimensional random variable $X$ with unknown distribution $P_X$, a normalizing
flow aims to learn a diffeomorphism $\mathcal T\colon\R^n\to\R^n$ such that 
$
P_X\approx \mathcal T^{-1}_\# P_Z
$.
To this end, the diffeomorphism will be a neural network $\mathcal T_\theta$ with parameters $\theta$, which 
is by construction invertible.
For the training, we consider the maximum likelihood loss, i.e., we minimize
$$
\mathcal L(\theta)=\sum_{i=1}^Np_{{\mathcal T_\theta}^{-1}_\#P_Z}(x_i).
$$
Note, that using the change of variables formula for probability density functions, we have that 
$p_{{\mathcal T_\theta}^{-1}_\#P_Z}(x)$ can be computed by
$$
p_{{\mathcal T_\theta}^{-1}_\#P_Z}(x)=p_Z(\mathcal T_\theta(x))|\nabla \mathcal T_\theta(x)|.
$$

In this section, we propose a new architecture for normalizing flows based on residual flows \cite{CBDJ2019}
and proximal neural networks \cite{HHNPSS2019,HNS2020}.

\subsection{Residual Flows}\label{sec_iResNets}

Residual flows were introduced in \cite{BGCDJ2019,CBDJ2019}.
The basic idea is to consider residual neural networks, where each subnetwork is constraint to be
$c$-Lipschitz continuous for some $c<1$, i.e., we have $T=L_K\circ\cdots\circ L_1$, where each mapping $L_k$ has the form
\begin{align}
L\colon\R^n\to\R^n,\quad L(x)=x+g(x),\quad \Lip(g)<1.\label{eq_residual_layer}
\end{align}
Then, Banach's fix point theorem yields that $L$ is invertible and the inverse $L^{-1}(y)$ can be computed by the limit
of the iteration
$$
x^{(r+1)}=y-g(x^{(r)}),\quad x^{(0)}=y.
$$
For ensuring the Lipschitz continuity of $g$ during the training of residual flows, 
the authors of \cite{BGCDJ2019,CBDJ2019} suggested to  use spectral normalization \cite{GFPC2021,MKKY2018}.
Finally, to evaluate and differentiate $\log(|\nabla \mathcal T(x)|)$, we have to evaluate and differentiate $\log(|\nabla L(x)|)$ 
for each residual block $L$.
In small dimensions, this can be done by algorithmic differentiation, which is in high dimensions computationally intractable.
Here we can apply the following theorem is from \cite[Theorem 1, Theorem 2]{CBDJ2019} which is based on an expansion of $\nabla L$ 
into a Neumann series.
\begin{theorem}\label{thm_eval_logdet}
Let $Q$ be a random variable on $\Z_{>0}$ such that $P(Q=k)>0$ for all $k\in\Z_{>0}$ and define $p_k=P(Q\geq k)$.
Consider the function $L(x)=x+g(x)$, where $g\colon\R^n\to\R^n$ is differentiable and fulfills $\mathrm{Lip}(g)<1$.
Then, it holds
$$
\log(|\nabla L(x)|)=\E_{v\sim \mathcal N(0,I),q\sim P_Q}\left[\sum_{k=1}^q\frac{(-1)^{k+1}}{k}\frac{v^\tT(\nabla g(x))^k v}{p_k}\right]
$$
and
$$
\frac{\partial}{\partial \theta} \log(|\nabla L(x)|)=\E_{v\sim \mathcal N(0,I),q\sim P_Q}\left[\left(\sum_{k=0}^q\frac{(-1)^k}{p_k} v^\tT (\nabla g(x))^k\right)\frac{\partial (\nabla g(x))}{\partial \theta} v\right].
$$
\end{theorem}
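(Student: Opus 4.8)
The plan is to reduce both identities to three standard ingredients: a power-series expansion that is valid precisely because $\Lip(g)<1$, the Hutchinson trace estimator, and a randomized (``Russian roulette'') truncation of the resulting series. Throughout I abbreviate $J=\nabla g(x)$ and observe that $\Lip(g)<1$ forces the operator norm $\|J\|<1$, so that every Neumann-type series below converges geometrically. I also record the elementary randomized-truncation identity: for any absolutely summable sequence $(a_k)_{k\ge1}$ and any $Q$ with $P(Q=k)>0$ and $p_k=P(Q\ge k)$, one has $\sum_{k\ge1}a_k=\E_{q\sim P_Q}\big[\sum_{k=1}^q a_k/p_k\big]$, which follows by inserting $\mathbb 1[k\le q]$ into the sum and using $\E_q\mathbb 1[k\le q]=p_k$.

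For the first identity I would start from $\nabla L(x)=I+J$. Since the eigenvalues of $J$ lie in the open unit disc, those of $I+J$ have positive real part and occur in conjugate pairs, so $\det(I+J)>0$ and hence $\log|\nabla L(x)|=\log\det(I+J)=\trace\log(I+J)$. Expanding the matrix logarithm as $\log(I+J)=\sum_{k\ge1}\frac{(-1)^{k+1}}{k}J^k$ and taking traces gives $\log\det(I+J)=\sum_{k\ge1}\frac{(-1)^{k+1}}{k}\trace(J^k)$. I then replace each trace by the Hutchinson estimator $\trace(J^k)=\E_{v\sim\mathcal N(0,I)}[v^\tT J^k v]$ and apply the truncation identity with $a_k=\frac{(-1)^{k+1}}{k}\E_v[v^\tT J^k v]$, which yields the claimed formula.

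For the second identity I would differentiate $\log\det(I+J)$ directly via Jacobi's formula, obtaining $\frac{\partial}{\partial\theta}\log\det(I+J)=\trace\big((I+J)^{-1}\frac{\partial J}{\partial\theta}\big)$, and then substitute the Neumann series $(I+J)^{-1}=\sum_{k\ge0}(-1)^kJ^k$. Pulling the $k$-independent factor $\frac{\partial J}{\partial\theta}$ out of the sum, applying Hutchinson, and invoking the same truncation identity --- now summing from $k=0$ with the convention $p_0=P(Q\ge0)=1$ so that the $k=0$ term is reproduced correctly --- produces $\E_{v,q}\big[\big(\sum_{k=0}^q\frac{(-1)^k}{p_k}v^\tT J^k\big)\frac{\partial J}{\partial\theta}v\big]$, exactly as stated.

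The routine algebra (power series, Jacobi's formula, factoring out $\partial J/\partial\theta$) is not where the difficulty lies. The main obstacle is justifying the two interchanges of infinite summation with expectation. Both are handled by absolute convergence: from $|\trace(J^k)|\le n\|J\|^k$ one gets $\sum_k\frac1k|\trace(J^k)|<\infty$ geometrically, and in the roulette step $\sum_k\E_q[\,|a_k|/p_k\,\mathbb 1[k\le q]]=\sum_k|a_k|<\infty$, so Fubini's theorem legitimizes swapping sum and expectation in both the Hutchinson and the truncation steps. The positivity of every $p_k$, guaranteed by $P(Q=k)>0$, is what keeps the estimator well defined.
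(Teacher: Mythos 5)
Your proposal is correct, and it reconstructs precisely the argument behind this result: the paper itself gives no proof here but imports the statement verbatim from \cite[Theorems 1 and 2]{CBDJ2019}, whose proof is exactly your chain of $\log\det(I+J)=\trace\log(I+J)$, the Mercator/Neumann expansions (valid since $\Lip(g)<1$ forces $\|\nabla g(x)\|<1$), Hutchinson's trace estimator, the Russian-roulette truncation with weights $1/p_k$, and Jacobi's formula for the derivative. Your attention to the two Fubini interchanges and to the convention $p_0=1$ for the $k=0$ term covers the only points where care is genuinely needed, so there is nothing to add.
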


\subsection{Proximal Neural Networks}\label{sec_PNNs}

Averaged operators and in particular proximity operators received increasingly attention in deep learning over the last years \cite{BPCPE2011,CP2011,GOY2017}. 
For a proper, convex and lower semi-continuous function $f\colon\R^d\to\R\cup\{\infty\}$, the proximity operator of $f$ is given by
$$
\prox_{\lambda f}(x)=\argmin_{y\in\R^n}\{\tfrac{1}{2\lambda}\|x-y\|^2+f(y)\}.
$$
Proximity operators are in particular $\tfrac{1}{2}$-averaged operators, i.e.,
$$
\prox_{\lambda f}(x)=\tfrac12 x+\tfrac12 R(x),\quad\text{for some }R\text{ with } \Lip(R)\leq 1.
$$
In \cite{CP2020} the authors observed that most activation functions of neural networks are proximity operators.
They proved that an activation function $\sigma$ is a proximity operator with respect to some function $g$, which has $0$ as a 
minimizer, if and only if $\sigma$ is $1$-Lipschitz continuous, monotone increasing and fulfills $\sigma(0)=0$. 
They called the class of such activation functions \emph{stable activation functions}.
Using this result, Proximal Neural Networks (PNNs) were introduced in \cite{HHNPSS2019} as the concatenation of blocks of the form
$$
B(\cdot;T,b,\alpha)\coloneqq T^\tT\sigma_\alpha(T\cdot+b),
$$
where $b\in\R^d$, $T$ or $T^\tT$ is in the Stiefel manifold $\St(n,m)=\{T\in\R^{n,m}:T^\tT T=I\}$ and $\sigma_\alpha$ is a stable activation function which may depend on
some additional parameter $\alpha$.
It can be shown that $B$ is again a proximity operator of some proper, convex
and lower semi-continuous function, see \cite{HHNPSS2019}.
Now a PNN with $K$ layers is defined as
$$
\Phi(\cdot;u)=B_K(\cdot;T_K,b_K,\alpha_K)\circ\cdots\circ B_1(\cdot;T_1,b_1,\alpha_1),
$$
where $u=((T_k)_{k=1}^K,(b_k)_{k=1}^K,(\alpha_k)_{k=1}^K)$. Since $\Phi$ is the concatenation of $K$ $\tfrac12$-averaged operators,
we obtain that $\Phi$ is $K/(K+1)$-averaged.
From a numerical viewpoint, it was shown in \cite{HNS2020} that (scaled) PNNs show a comparable performance as usual convolutional
neural networks for denoising.\\

The training of PNNs is not straightforward due to the condition that $T$ or $T^\tT$ are contained in the Stiefel manifold.
The authors of \cite{HHNPSS2019,HNS2020,HS2020} propose a stochastic gradient descent on the manifold of the parameters,
the minimization of a penalized functional and a stochastic variant of the inertial PALM algorithm \cite{BST2014,PS2016}.
However, to ensure the invertibility of proximal residual flows, it is important that the constraint $T_k\in\St(n,m)$ 
is fulfilled during the full training procedure.
Therefore, we propose the following different training procedure.

Instead of training the matrices $T_k$ directly, we define $T_k=P_{\St(n,m)}(\tilde T_k)$, where $P_{\St(n,m)}$ denotes
the orthogonal projection onto the Stiefel manifold.
For dense matrices and convolutions with full filter length, this projection is given by the $U$-factor of the polar decomposition,
see \cite[Sec.~7.3, Sec.~7.4]{HJ2013}, 
and can be computed by the iteration
$$
Y_{n+1}=2Y_n(I+Y_n^\tT Y_n)^{-1},\quad Y_0=\tilde T_k.
$$ 
see \cite[Chap.~8]{H2008}. 
Unfortunately, we are not aware of a similar iterative algorithm for convolutions with limited filter length.
Finally, we optimize the matrices $\tilde T_k$ instead of the matrices $T_k$.
In order to ensure numerical stability, we regularize the distance of $\tilde T_k$ to the Stiefel manifold by the
penalizer $\|\tilde T_k^\tT \tilde T_k-I\|_F^2$.

\subsection{Proximal Residual Flows}\label{sec_sub_iResPNNs}

Now, we propose proximal residual flows as the concatenation $T=L_K\circ\cdots\circ L_1$ of residual blocks $L_k$ of the form
\begin{equation}\label{eq_iResPNN_layer}
L_k(x)=x+\gamma_k \Phi_k(x),
\end{equation}
where $\gamma_k>0$ is some constant and $\Phi_k$ is a PNN.
The following proposition ensures the invertibility of $L_k$ and $T$.
\begin{proposition}\label{prop_invertible}
Let $\Phi$ be a $t$-averaged operator with $\tfrac12<t\leq 1$ and let $0<\gamma<\tfrac{1}{2t-1}$. 
Then, the function $L(x)=x+\gamma\Phi(x)$ is invertible and the inverse $L^{-1}(y)$ is given by the limit of the sequence
\begin{align}
x^{(r+1)}=\frac1{1+\gamma-\gamma t} y-\frac{\gamma t}{1+\gamma -\gamma t} R(x^{(r)}),\quad\text{where}\quad R(x)\coloneqq\frac{1}{t}\Phi(x)-\frac{1-t}{t}x.\label{eq_FP_averaged}
\end{align}
Additionally, if $\Phi$ is $t$-averaged with $0\leq t\leq\tfrac12$, then the above statement is true for arbitrary $\gamma>0$.
\end{proposition}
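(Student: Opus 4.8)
The plan is to reduce the inversion of $L$ to a single Banach fixed point problem. First I would unfold the definition of an averaged operator: $\Phi$ being $t$-averaged means $\Phi=(1-t)I+tR$ for some nonexpansive $R$, i.e. with $\Lip(R)\leq 1$. Solving this relation for $R$ recovers exactly the operator $R=\tfrac1t\Phi-\tfrac{1-t}{t}I$ appearing in the statement, so $R$ is by construction $1$-Lipschitz. Substituting $\Phi(x)=(1-t)x+tR(x)$ into $L(x)=x+\gamma\Phi(x)$ and collecting terms yields
$$
L(x)=(1+\gamma-\gamma t)\,x+\gamma t\,R(x).
$$

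Next I would rewrite the equation $L(x)=y$ as a fixed point equation. Since $\gamma>0$ and $t\leq 1$ give $1+\gamma-\gamma t\geq 1>0$, the coefficient of $x$ is invertible, and dividing through shows that $L(x)=y$ is equivalent to $x=\Psi_y(x)$, where
$$
\Psi_y(x)=\frac{1}{1+\gamma-\gamma t}\,y-\frac{\gamma t}{1+\gamma-\gamma t}\,R(x).
$$
This is precisely the iteration map in \eqref{eq_FP_averaged}, so it suffices to analyse its fixed points.

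The main step is to verify that $\Psi_y$ is a contraction uniformly in $y$. Because $R$ is nonexpansive, $\Psi_y$ has Lipschitz constant at most $\tfrac{\gamma t}{1+\gamma-\gamma t}$, and I would show this quantity is strictly below $1$. A short manipulation turns the inequality $\tfrac{\gamma t}{1+\gamma-\gamma t}<1$ into $\gamma(2t-1)<1$. In the regime $\tfrac12<t\leq 1$ this is exactly the hypothesis $\gamma<\tfrac{1}{2t-1}$, while in the regime $0\leq t\leq\tfrac12$ the left-hand side is nonpositive, so the inequality holds for every $\gamma>0$; this accounts for the final assertion, the degenerate case $t=0$ (where $\Phi=I$ and $L(x)=(1+\gamma)x$) being immediate. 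With the contraction constant in hand, Banach's fixed point theorem applies on the complete space $\R^n$: for each $y$ the map $\Psi_y$ has a unique fixed point, obtained as the limit of the sequence in \eqref{eq_FP_averaged}. Uniqueness of the fixed point gives injectivity of $L$ and its existence for every $y$ gives surjectivity, so $L$ is a bijection whose inverse is computed by the stated iteration.

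I do not expect a genuine obstacle here; the argument is a direct application of the contraction mapping principle once $L$ is put in the form $ax+bR(x)$. The only points requiring care are checking $1+\gamma-\gamma t>0$, so that the reformulation as $\Psi_y$ is legitimate, and correctly splitting the contraction inequality $\gamma(2t-1)<1$ according to the sign of $2t-1$, which is what separates the two cases of the proposition.
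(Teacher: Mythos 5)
Your proof is correct and follows essentially the same route as the paper's: decompose $\Phi=(1-t)I+tR$ with $R$ nonexpansive, observe that $\gamma<\tfrac{1}{2t-1}$ is equivalent to the contraction constant $\tfrac{\gamma t}{1+\gamma-\gamma t}$ being strictly less than $1$, and apply Banach's fixed point theorem to the iteration \eqref{eq_FP_averaged}, with the case $t\leq\tfrac12$ handled by noting the contraction condition holds for every $\gamma>0$. Your write-up is in fact slightly more careful than the paper's (explicitly checking $1+\gamma-\gamma t>0$ and the degenerate case $t=0$), but there is no substantive difference in approach.
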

Note that $t=1$ in the proposition exactly recovers the case considered in \cite{BGCDJ2019}.
\begin{proof}
Since $\Phi$ is $t$-averaged, we get $\Phi=(1-t) I+ t R$, where $R\coloneqq \tfrac1t \Phi-\tfrac{1-t}{t} I$ is $1$-Lipschitz continuous. 
Further, note that $\gamma<\tfrac{1}{2t-1}$ is equivalent to $\frac{\gamma t}{1+\gamma-\gamma t}<1$.
Therefore, Banach's fix point theorem yields that the sequence $(x^{(r)})_r$ converges to the unique fix point $x$ of
$$
x=\frac1{1+\gamma-\gamma t} y-\frac{\gamma t}{1+\gamma -\gamma t} R(x)
$$
which is equivalent to $y=x+\tfrac{\gamma}{2}(I+R)(x)=x+\gamma\Phi$. In particular, $x$ is the unique solution of $L(x)=y$.
In the case $t\leq\tfrac12$, we have that $\frac{\gamma t}{1+\gamma-\gamma t}<1$ is true for any $\gamma>0$ such that the same argumentation applies.
\end{proof}

Using the proposition, we obtain, that $L_k$ from \eqref{eq_iResPNN_layer} is invertible, as long as $0<\gamma_k<\tfrac{\kappa+1}{\kappa-1}$,
where $\kappa$ is the number of layers of $\Phi_k$.
In contrast to residual flows, the subnetworks $\gamma_k\Phi_k$ of proximal residual flows may 
have Lipschitz constants larger than $1$.
For instance, if $\kappa=3$, then the upper bound on the Lipschitz constant of the subnetwork is $2$ instead of $1$.

\begin{remark}\label{rem_reproduce}
Let $\Phi=B_K\circ\cdots\circ B_1\colon\R^n\to\R^n$ be a PNN with layers $B_i(x)=T_i^\tT\sigma(T_ix+b_i)$. 
Then, by definition, it holds $B_k\circ\cdots\circ B_1(x)\in\R^n$ for all $k=1,...,K$.
In particular, each layer of the PNN has at most $n$ neurons, which possibly limits the expressiveness. 
We overcome this issue with a small trick.
Let
$$
A=\frac{1}{\sqrt{p}}\left(\begin{array}{c}I\\\vdots\\I\end{array}\right)\in\St(pn,n)
$$
and let $\Phi\colon \R^{pn}\to\R^{pn}$ be $t$-averaged.
Then, a simple computation yields that also 
$
\Psi=A^\tT\Phi(A\cdot)
$
is a $t$-averaged operator.
In particular, we can use a PNN with $pn$ neurons in each layer instead of a PNN with $n$ neurons in each layer, 
which increases the expressiveness of the network a lot.
\end{remark}

For the evaluation of $\log(|\nabla \mathcal T(x)|)$, we adapt Theorem~\ref{thm_eval_logdet} for proximal residual flows.

\begin{corollary}\label{cor_logdet_PNN}
Let $Q$ be a random variable on $\Z_{>0}$ such that $P(Q=k)>0$ for all $k\in\Z_{>0}$ and define $p_k=P(Q\geq k)$.
Consider the function $L(x)=x+\gamma \Phi(x)$, where $\Phi\colon\R^n\to\R^n$ is differentiable and $t$-averaged for $t\in(\tfrac12,1]$ and $0<\gamma<\tfrac1{2t-1}$.
Then, it holds
$$
\log(|\nabla L(x)|)=\E_{v\sim \mathcal N(0,I),q\sim P_Q}\left[\sum_{k=1}^q\frac{(-1)^{k+1}}{k}\frac{v^\tT(\tfrac{\gamma t}{1+\gamma-\gamma t} \nabla R(x))^k v}{p_k}\right]+n\log(1+\gamma-\gamma t),
$$
where $R(x)=\frac{1}{t}\Phi(x)-\frac{1-t}{t}x$.
Further, we have
$$
\frac{\partial}{\partial \theta} \log(|\nabla L(x)|)=\E_{v\sim \mathcal N(0,I),q\sim P_Q}\left[\left(\sum_{k=0}^q\frac{(-1)^k}{p_k} v^\tT (\tfrac{\gamma t}{1+\gamma-\gamma t}\nabla R(x))^k\right)\frac{\partial (\tfrac{\gamma t}{1+\gamma-\gamma t}\nabla R(x))}{\partial \theta} v\right].
$$
Additionally, if $\Phi$ is $t$-averaged for $0\leq t\leq\tfrac12$, then the above statement holds true for any $\gamma>0$.
\end{corollary}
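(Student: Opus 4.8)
The plan is to reduce the statement directly to Theorem~\ref{thm_eval_logdet} by factoring out a scalar. Since $\Phi$ is $t$-averaged, I first write $\Phi=(1-t)I+tR$ with $R=\tfrac1t\Phi-\tfrac{1-t}{t}I$ being $1$-Lipschitz, exactly as in the proof of Proposition~\ref{prop_invertible}. Substituting this into $L(x)=x+\gamma\Phi(x)$ and collecting terms gives
\[
L(x)=(1+\gamma-\gamma t)\Bigl(x+\tfrac{\gamma t}{1+\gamma-\gamma t}R(x)\Bigr).
\]
Writing $c\coloneqq 1+\gamma-\gamma t$ and $g\coloneqq\tfrac{\gamma t}{c}R$, this reads $L(x)=c\,(x+g(x))$, i.e.\ $L$ is a scalar multiple of a residual layer of the form treated in Theorem~\ref{thm_eval_logdet}.

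Next I check that $\tilde L(x)\coloneqq x+g(x)$ satisfies the hypotheses of that theorem. The map $g$ is differentiable because $\Phi$, and hence $R$, is. For the Lipschitz bound I reuse the equivalence recorded in Proposition~\ref{prop_invertible}: the assumption $\gamma<\tfrac1{2t-1}$ is the same as $\tfrac{\gamma t}{1+\gamma-\gamma t}<1$, so
\[
\Lip(g)=\tfrac{\gamma t}{c}\,\Lip(R)\le\tfrac{\gamma t}{1+\gamma-\gamma t}<1.
\]
In the regime $0\le t\le\tfrac12$ the same bound $\tfrac{\gamma t}{1+\gamma-\gamma t}<1$ holds for every $\gamma>0$, which is exactly how the additional assertion of the corollary will be obtained.

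The link between $\log(|\nabla L(x)|)$ and $\log(|\nabla\tilde L(x)|)$ is pure multiplicativity of the determinant. From $L=c\,\tilde L$ we get $\nabla L(x)=c\,\nabla\tilde L(x)$, a scalar times an $n\times n$ matrix, hence $|\nabla L(x)|=c^n|\nabla\tilde L(x)|$ and
\[
\log(|\nabla L(x)|)=n\log(1+\gamma-\gamma t)+\log(|\nabla\tilde L(x)|).
\]
Here $c=1+\gamma-\gamma t\ge 1>0$ since $1-t\ge0$, so all logarithms are well defined and the Jacobian determinant of $L$ is positive. Applying the first formula of Theorem~\ref{thm_eval_logdet} to $\tilde L$ with $\nabla g(x)=\tfrac{\gamma t}{1+\gamma-\gamma t}\nabla R(x)$ and adding the offset $n\log(1+\gamma-\gamma t)$ reproduces the first claimed identity verbatim.

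For the parameter derivative I differentiate the same additive decomposition. The term $n\log(1+\gamma-\gamma t)$ depends only on the hyperparameters $\gamma$ and $t$ and not on the network parameters $\theta$, so it drops out and $\tfrac{\partial}{\partial\theta}\log(|\nabla L(x)|)=\tfrac{\partial}{\partial\theta}\log(|\nabla\tilde L(x)|)$; the second formula of Theorem~\ref{thm_eval_logdet}, again with $\nabla g=\tfrac{\gamma t}{1+\gamma-\gamma t}\nabla R$, then gives the stated expression. I do not expect any genuine obstacle here: the whole argument is the scalar factorization plus an invocation of Theorem~\ref{thm_eval_logdet}. The only points that require care are verifying the contraction bound $\tfrac{\gamma t}{1+\gamma-\gamma t}<1$ through the proposition's equivalence, and bookkeeping the scalar prefactor $c$ so that it contributes exactly the additive constant $n\log(1+\gamma-\gamma t)$ and nothing more.
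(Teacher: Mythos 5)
Your proof is correct and follows exactly the paper's argument: the same averaged-operator decomposition $\Phi=(1-t)I+tR$, the same scalar factorization $L=(1+\gamma-\gamma t)\bigl(I+\tfrac{\gamma t}{1+\gamma-\gamma t}R\bigr)$, and the same invocation of Theorem~\ref{thm_eval_logdet} with $g=\tfrac{\gamma t}{1+\gamma-\gamma t}R$. The only difference is that you spell out a few details the paper leaves implicit (positivity of the prefactor, the constant term vanishing under $\partial/\partial\theta$, and the $t\le\tfrac12$ case), which is fine.
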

\begin{proof}
Since $\Phi$ is $t$-averaged, it holds $\Phi=(1-t)I+tR$, where $R=\tfrac1{t}\Phi-\tfrac{1-t}{t} I$ is $1$-Lipschitz continuous.
Thus, we have
$$
L=(1+\gamma-\gamma t)I+\gamma t R=(1+\gamma-\gamma t)(I+\tfrac{\gamma t}{1+\gamma-\gamma t}R),
$$ 
such that
$$
\log(|\nabla L(x)|)=\log(|\nabla (I+\tfrac{\gamma t}{1+\gamma-\gamma t}R)(x)|)+n\log(1+\gamma-\gamma t).
$$
Now, since $\tfrac{\gamma t}{1+\gamma-\gamma t}<1$, applying Theorem~\ref{thm_eval_logdet} with $g=\tfrac{\gamma t}{1+\gamma-\gamma t}R$
gives the assertion.
\end{proof}

In the special case, that the PNN $\Phi$ consists of only one layer, we can derive the log-determinant explicitly by the following
lemma.

\begin{lemma}
Let $\Phi(x)=T^\tT\sigma(Tx+b)$ for $T^\tT\in\St(n,m)$, $n\geq m$ and a differentiable activation function $\sigma\colon\R\to\R$. Then, the $\log$-determinant of the Jacobian of $L(x)=x+\gamma\Phi(x)$ is given by
$$
\log(|\nabla L(x)|)=\sum_{i=1}^m\log(1+\gamma\sigma_i'(Tx+b)),
$$
where $\sigma_i'(Tx+b)$ is the $i$th component of $\sigma'(Tx+b)$.
\end{lemma}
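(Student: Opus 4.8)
The plan is to collapse the $n\times n$ Jacobian determinant to an $m\times m$ one by combining Sylvester's determinant identity with the Stiefel constraint, and then to read off the result from a diagonal matrix. First I would compute the Jacobian explicitly. Since $\sigma$ acts componentwise and $Tx+b\in\R^m$, the chain rule gives $\nabla\Phi(x)=T^\tT\diag(\sigma'(Tx+b))\,T$, so that
$$
\nabla L(x)=I_n+\gamma\,T^\tT D\,T,\qquad D\coloneqq\diag(\sigma'(Tx+b))\in\R^{m,m}.
$$
The whole difficulty of the $\det(I_n+\cdots)$ expression is that $T^\tT D T$ is an $n\times n$ matrix, whereas the nonzero action happens in the $m$-dimensional range of $T$.

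The key step is Sylvester's determinant identity (Weinstein--Aronszajn), $\det(I_n+AB)=\det(I_m+BA)$ for $A\in\R^{n,m}$ and $B\in\R^{m,n}$. I would apply it with $A=\gamma T^\tT D$ and $B=T$, which yields
$$
|\nabla L(x)|=\det(I_n+\gamma\,T^\tT D\,T)=\det(I_m+\gamma\,T T^\tT D).
$$
Now I exploit the hypothesis $T^\tT\in\St(n,m)$: by definition of the Stiefel manifold this means $(T^\tT)^\tT T^\tT=T T^\tT=I_m$, so the right-hand side collapses to $\det(I_m+\gamma D)$. Since $D$ is diagonal with entries $\sigma_i'(Tx+b)$, the matrix $I_m+\gamma D$ is diagonal with entries $1+\gamma\sigma_i'(Tx+b)$, and its determinant is $\prod_{i=1}^m\bigl(1+\gamma\sigma_i'(Tx+b)\bigr)$. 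Taking logarithms then gives the claimed identity.

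The only point that requires genuine care---the closest thing to an obstacle---is the passage to the logarithm, i.e.\ justifying that $|\nabla L(x)|$ equals the product itself (not its absolute value with a sign flip) and that each factor is admissible in the $\log$. For this I would invoke the standing assumptions on the activation: a stable activation function is monotone increasing and $1$-Lipschitz, so $\sigma'(\cdot)\in[0,1]$, whence $1+\gamma\sigma_i'(Tx+b)\geq 1>0$ for every $\gamma>0$. Consequently the determinant is strictly positive, the absolute value is superfluous, and each logarithm is well defined, completing the argument.
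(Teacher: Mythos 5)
Your proof is correct, but it takes a genuinely different route from the paper. The paper completes $T^\tT$ to an orthogonal matrix $S=(T^\tT|\tilde T^\tT)\in\R^{n\times n}$, conjugates $\nabla L(x)=I_n+\gamma T^\tT D T$ by $S$, and uses $TS=(I_m|0)$ to exhibit the conjugated matrix as the block-diagonal matrix with blocks $I_m+\gamma D$ and $I_{n-m}$, from which the determinant is read off. You instead invoke Sylvester's determinant identity $\det(I_n+AB)=\det(I_m+BA)$ with $A=\gamma T^\tT D$, $B=T$, and then use the Stiefel condition $TT^\tT=I_m$ to collapse $\det(I_m+\gamma TT^\tT D)$ to $\det(I_m+\gamma D)$. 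Your argument is shorter and avoids constructing the orthogonal completion $\tilde T$ (whose existence the paper uses without comment), at the price of importing the Weinstein--Aronszajn identity as a black box; the paper's conjugation argument is more self-contained, using only invariance of the determinant under orthogonal similarity. A genuine merit of your write-up is the final paragraph: the lemma only assumes $\sigma$ differentiable, and neither statement nor the paper's proof addresses why each factor $1+\gamma\sigma_i'(Tx+b)$ is positive, which is needed both to drop the absolute value and for the logarithm to be defined; your appeal to the standing assumption that $\sigma$ is a stable (hence monotone, $1$-Lipschitz) activation, giving $\sigma'\in[0,1]$ and factors $\geq 1$, closes this small gap that the paper leaves implicit.
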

\begin{proof}
Let $\tilde T$ be a matrix, such that $S=(T^\tT|\tilde T^\tT)\in\R^{n\times n}$ is an orthogonal matrix. We have that
$$
\nabla\Phi(x)=T^\tT\sigma'(Tx+b)T, \quad \nabla L(x)=I_n+\gamma T^\tT\sigma'(Tx+b)T.
$$
Then, by orthogonality of $S$, it follows
$$
|\nabla L(x)|=|S^\tT||I_n+\gamma T^\tT\sigma'(Tx+b) T||S|=|I_n + \gamma (T S)^\tT\sigma'(Tx+b) (T S)|.
$$
Since $T^\tT\in\St(n,m)$, it holds by the definition of $S$ that $TS=(I_m|0)$ such that
$$
|\nabla L(x)|=\left|\left(\begin{array}{cc}I_m+\gamma \sigma'(Tx+b)&0\\0&I_{n-m}\end{array}\right)\right|=\prod_{i=1}^m 1+\gamma\sigma_i'(Tx+b).
$$
Taking the logarithm proves the statement.
\end{proof}

\section{Conditional Proximal Residual Flows}\label{sec_conditional_iResPNNs}

In the following, we consider for a random variable $X$ the inverse problem
\begin{align}\label{eq_inv_prob}
Y=F(X)+\eta,
\end{align}
where $F\colon\R^n\to\R^d$ is an ill-posed/ill-conditioned forward operator and $\eta$ is some noise.
Now, we aim to train a conditional normalizing flow model for reconstructing all posterior distributions $P_{X|Y=y}$, $y\in\R^d$.
More precisely, we want to learn a mapping $\mathcal T\colon\R^d\times\R^n\to\R^n$ such that $\mathcal T(y,\cdot)$
is invertible for all $y\in\R^d$ and 
$$
P_{X|Y=y}=\mathcal T(y,\cdot)^{-1}_\#P_Z.
$$
For this purpose, $\mathcal T_\theta$ will be a neural network with parameters $\theta$. We learn $\mathcal T_\theta$
from i.i.d.~samples $(x_1,y_1),...,(x_N,y_N)$ of $(X,Y)$ using the maximum likelihood loss
$$
\mathcal L(\theta)=\sum_{i=1}^Np_{{\mathcal T_{\theta}(y_i,\cdot)^{-1}}_\#P_Z}(x_i)\approx\E_{y\sim P_Y}[\mathrm{KL}(P_{X|Y=y},P_{{\mathcal T_\theta(y,\cdot)^{-1}}_\#P_Z})]+\mathrm{const}.
$$
Note that for the real NVP architecture \cite{DSB2017}, such flows were considered in \cite{ALKRK2019,DSLM2021,HHS2021}.

For using proximal residual flows as conditional normalizing flows, we need the following lemma.

\begin{lemma}
Let $\Phi=(\Phi_1,\Phi_2)\colon\R^{d}\times\R^n\to\R^{d}\times\R^{n}$ be a $t$-averaged operator. Then, for any $y\in\R^d$, the operator $\Phi_2(y,\cdot)$ is $t$-averaged.
\end{lemma}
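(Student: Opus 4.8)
The plan is to reduce the statement to the defining property of averaged operators and then restrict the associated nonexpansive operator to a single slice $\{y\}\times\R^n$. First I would invoke the characterization already used in the proof of Proposition~\ref{prop_invertible}: since $\Phi$ is $t$-averaged, there is a $1$-Lipschitz operator $R\colon\R^d\times\R^n\to\R^d\times\R^n$ with $\Phi=(1-t)I+tR$. Writing $R=(R_1,R_2)$ in its two components, the second component of this identity reads
$$
\Phi_2(y,x)=(1-t)x+t\,R_2(y,x),
$$
so the natural candidate for the nonexpansive part of $\Phi_2(y,\cdot)$ is the slice $S_y\coloneqq R_2(y,\cdot)$.

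It remains to check that $S_y$ is $1$-Lipschitz for every fixed $y$. For this I would exploit that $R$ is nonexpansive on the product space equipped with the Euclidean norm: for arbitrary $x,x'\in\R^n$, applying nonexpansiveness to the two points $(y,x)$ and $(y,x')$ yields
$$
\|R_1(y,x)-R_1(y,x')\|^2+\|R_2(y,x)-R_2(y,x')\|^2\leq \|x-x'\|^2,
$$
because the first coordinates coincide and hence contribute nothing to the input distance. Dropping the nonnegative first summand on the left gives $\|R_2(y,x)-R_2(y,x')\|\leq\|x-x'\|$, i.e. $\Lip(S_y)\leq1$. Consequently $\Phi_2(y,\cdot)=(1-t)I+tS_y$ exhibits $\Phi_2(y,\cdot)$ as a $t$-averaged operator with the same averaging constant $t$.

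I do not expect a genuine obstacle here; the argument is essentially the observation that projecting onto a coordinate slice cannot increase distances and therefore preserves nonexpansiveness. The only point worth stating carefully is that the averaging parameter is inherited unchanged, which is immediate because the identity part $(1-t)I$ of $\Phi$ restricts to $(1-t)I$ on the $\R^n$ factor. If one wanted to avoid decomposing $R$ explicitly, an equivalent route would be to verify the $t$-averagedness inequality for $\Phi_2(y,\cdot)$ directly from the corresponding inequality for $\Phi$, but the slice argument via $R$ seems the cleanest.
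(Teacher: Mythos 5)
Your proposal is correct and follows essentially the same route as the paper's own proof: decompose $\Phi=(1-t)I+tR$ with $R=(R_1,R_2)$ nonexpansive, restrict to the slice $\{y\}\times\R^n$, and observe that $R_2(y,\cdot)$ inherits the $1$-Lipschitz bound since the $y$-coordinates cancel in the input distance. The only difference is cosmetic: you make explicit (via the squared-norm decomposition) the step of dropping the $R_1$-component, which the paper leaves implicit.
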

\begin{proof}
Since $\Phi$ is $t$-averaged, we have that $\Phi(y,x)=(1-t)(y,x)+tR(y,x)$ for some $1$-Lipschitz function $R=(R_1,R_2)\colon\R^n\times\R^d\to\R^n\times\R^d$. Now let $y\in\R^d$ be arbitrary fixed. Due to the Lipschitz continuity of $R$, it holds for $x_1,x_2\in\R^n$ that 
$$
\|R(y,x_1)-R(y,x_2)\|\leq\|(y,x_1)-(y,x_2)\|=\|x_1-x_2\|.
$$
Thus, $R_2(\cdot,y)$ is $1$-Lipschitz continuous and we get by definition that
$$
\Phi_2(y,x)=(1-t)x+tR_2(y,x),
$$
such that $\Phi_2(y,\cdot)$ is a $t$-averaged operator. 
\end{proof}

Now, we define a conditional proximal residual flows as a mapping $\mathcal T\colon\R^d\times\R^n\to\R^n$ given by $\mathcal T(\cdot,y)=L_K(\cdot,y)\circ\cdots\circ L_1(\cdot,y)$ with
\begin{equation}\label{eq_cprf_layer}
L_k(y,x)=x+\gamma_k\Phi_{k,2}(y,x),
\end{equation}
where $\Phi_k=(\Phi_{k,1},\Phi_{k,2})\colon \R^n\times\R^d\to\R^n\times\R^d$ is a PNN.
By definition, we have that $\mathcal T(y,\cdot)$ is a proximal residual flow for any fixed $y$ such that the invertibility result in 
Proposition \ref{prop_invertible} applies.

\section{Numerical Examples}\label{sec_num}

In this section, we demonstrate the performance of proximal residual flows
by numerical examples. First, in Subsection~\ref{subsec_den} we apply proximal residual flows in an unconditional setting.
Afterwards, in Subsection~\ref{subsec_con}, we consider a conditional setting with Bayesian inverse problems.
In both cases, we compare our results with residual flows \cite{BGCDJ2019,CBDJ2019} and a variant of the real NVP architecture \cite{AKWR2018,DSB2017}.
Within all architectures we use activation normalization \cite{KD2018} after every invertible block.
For evaluating the quality of our results, we will use the following error measures.
\begin{itemize}
\item The \textbf{empirical Kullback Leibler divergence} of two probability measures $P$ and $Q$ on $[0,1]^d$ approximates $\mathrm{KL}(P,Q)$
based on samples. Given samples $x_1,...,x_N$ of $P$ and samples $y_1,...,y_M$ of $Q$, the empirical KL divergence of $P$ and
$Q$ on $[0,1]^d$ using a $d_1\times\ldots\times d_d$ grid is given by
$$
\sum_{i_1=0}^{d_1-1}\cdots\sum_{i_d=0}^{d_d-1} h_{i_1,...,i_d}\log\Big(\frac{h_{i_1,...,i_d}}{\tilde h_{i_1,...,i_d}}\Big),
$$
where $h$ and $\tilde h$ are the normalized histograms of $(x_i)_i$ and $(y_i)_i$, i.e., $h$ is defined by
$$
h_{i_1,...,i_d}=\frac{\#\{i:x_i\in[\tfrac{i_1}{d_1},\tfrac{i_1+1}{d_1}]\times\ldots\times[\tfrac{i_d}{d_d},\tfrac{i_d+1}{d_d}]\}}{N}
$$
and $\tilde h$ is defined analogously.
\item We evaluate the \textbf{empirical Wasserstein distance} of two probability measures $P$ and $Q$ by computing
$W_2\bigl(\frac1N\sum_{i=1}^N\delta_{x_i},\frac1N\sum_{i=1}^N\delta_{y_i}\bigr)$, where the $x_i$ are iid samples from $P$
and the $y_i$ are iid samples from $Q$.
\end{itemize}
All implementations are done in Python and Tensorflow.
We run them on a single NVIDIA GeForce GTX 2060 Super GPU with 8 GB memory.
For the training we use the Adam optimizer \cite{KB2015}.
The training parameters are given in Table~\ref{tab_details}.
We use fully connected PNNs
with three layers as subnetworks and evaluate the log-determinant exactly by backprobagation.

\begin{table}
\centering
\begin{tabular}{c|cccccccccc}
Method & $n$ & $d$ & $K$ & $p$ & $h$ & $\gamma$ & $b$ & $e$ & $s$ & $\tau$\\\hline
Toy examples & $2$ & - & $20$ & $64$ & $64$ & $1.99$ & $200$ & $20$ & $2000$ & $10^{-3}$\\
Alanine Dipeptide & $66$ & - & $20$ & $2$ & $100$ & $1.99$ & $200$ & $20$ & $2000$ & $10^{-3}$\\
Circle & $2$ & $1$ & $20$ & $64$ & $64$ & $1.99$ & $800$ & $20$ & $2000$ & $10^{-3}$ \\
Scatterometry & $3$ & $23$ & $20$ & $10$ & $128$ & $1.99$ & $1600$ & $20$ & $2000$ & $5\cdot 10^{-3}$\\
Mixture models & $50$ & $50$ & $20$ & $2$ & $128$ & $1.99$ & $200$ & $20$ & $2000$ & $5\cdot 10^{-3}$
\end{tabular}
\caption{Implementation details for the numerical experiments. The table contains for each experiment the dimension $n$,
the dimension $d$ of the condition, the number of residual blocks $K$, the parameter $p$ from Remark~\ref{rem_reproduce}, the hidden dimension $h$ within
the subnetwork, the parameter $\gamma$ in the equations \eqref{eq_iResPNN_layer} and \eqref{eq_cprf_layer}, the batch size $b$, the number of epochs $e$, the number of steps per epoch $s$ and the learning rate $\tau$.}
\label{tab_details}
\end{table}

\subsection{Unconditional Examples.}\label{subsec_den}

In the following, we apply proximal residual flows for density estimation, i.e., we are in the setting of Section~\ref{sec_iResPNNs}.

\paragraph{Toy Densities.}
\begin{figure}[t]
\centering
\begin{subfigure}[t]{0.23\textwidth}
\includegraphics[width=\textwidth]{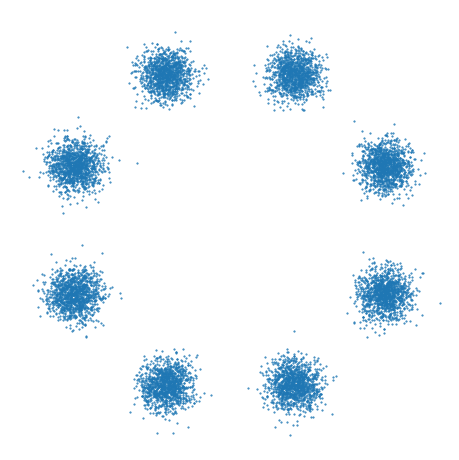}
\end{subfigure}
\begin{subfigure}[t]{0.23\textwidth}
\includegraphics[width=\textwidth]{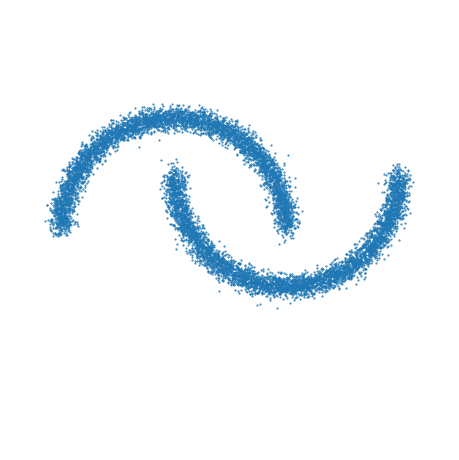}
\end{subfigure}
\begin{subfigure}[t]{0.23\textwidth}
\includegraphics[width=\textwidth]{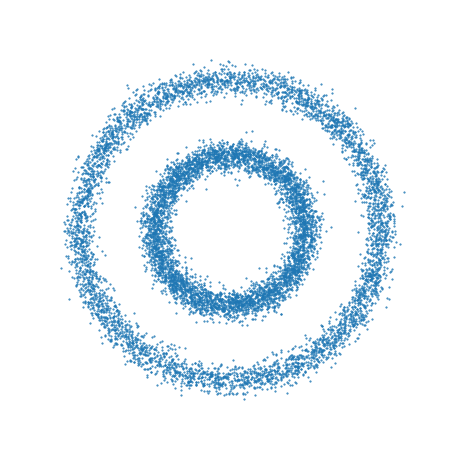}
\end{subfigure}
\begin{subfigure}[t]{0.23\textwidth}
\includegraphics[width=\textwidth]{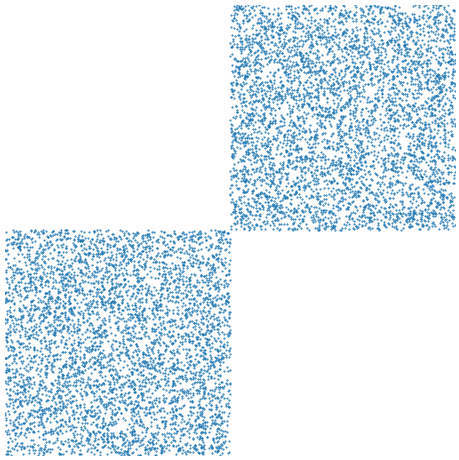}
\end{subfigure}

\begin{subfigure}[t]{0.23\textwidth}
\includegraphics[width=\textwidth]{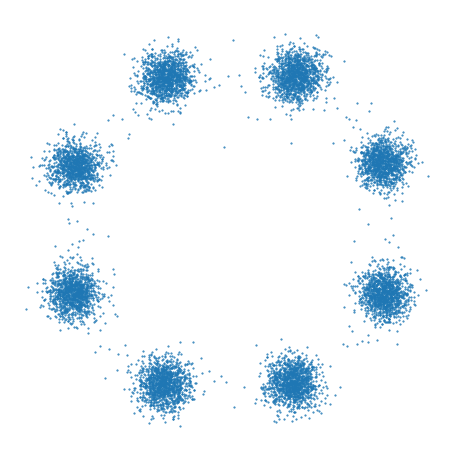}
\end{subfigure}
\begin{subfigure}[t]{0.23\textwidth}
\includegraphics[width=\textwidth]{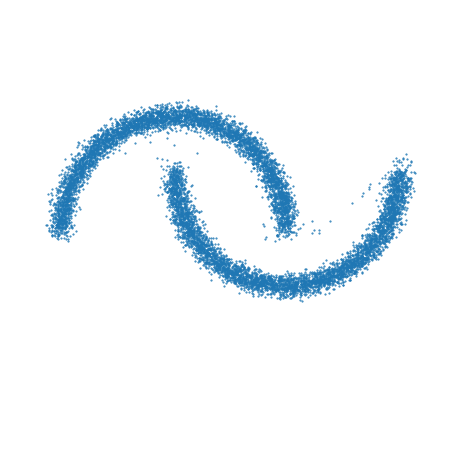}
\end{subfigure}
\begin{subfigure}[t]{0.23\textwidth}
\includegraphics[width=\textwidth]{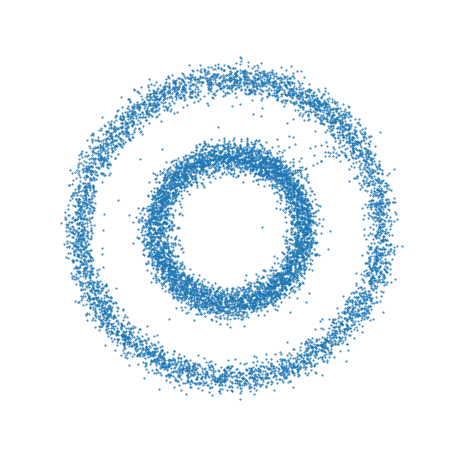}
\end{subfigure}
\begin{subfigure}[t]{0.23\textwidth}
\includegraphics[width=\textwidth]{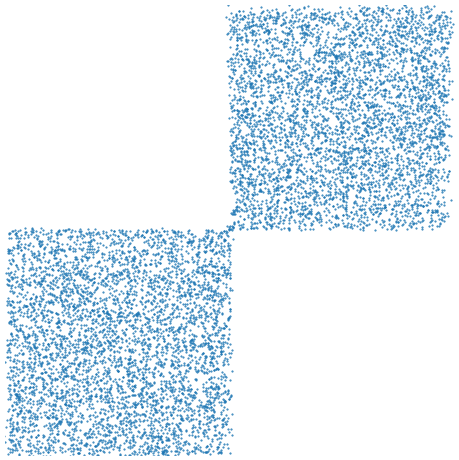}
\end{subfigure}
\caption{Reconstruction of toy densities. Top: Ground truth, Bottom: Reconstructions with proximal residual flows.}
\label{fig_toy}
\end{figure}

First, we train proximal residual flows onto some toy densities, namely 8 modes, two moons, two circles and checkboard.
Samples from the training data and the reconstruction with proximal residual flows are given in Figure~\ref{fig_toy}.
We observe that the proximal residual flow is able to learn all of the toy densities very well, even though
it was shown in \cite{HN2021} that a diffeomorphism, which pushes forward a unimodal distribution to a multimodal one must have a large
Lipschitz constant.

\paragraph{Alanine Dipeptide.}

Next, we evaluate proximal residual flows for an example from \cite{AMD2021,WKN2020}\footnote{For the data generation and evaluation of this example,
we use the code of \cite{WKN2020} available at \url{https://github.com/noegroup/stochastic_normalizing_flows}.}.
Here, we aim to estimate the density of molecular structures of alanine dipeptide molecules.
The structure of such molecules is described by an $66$-dimensional vector.
For evaluating the quality of the results, we follow \cite{WKN2020} and consider the marginal distribution onto the torsion angles, as introduced in \cite{NOKW2019}.
Afterwards, we consider the empirical Kullback Leibler divergence between these marginal distributions of the training data and the 
reconstruction by the proximal residual flows based on samples.
We compare our results with a normalizing flow consisting of $20$ real NVP blocks with subnetworks consisting of $3$
fully connected layers and a hidden dimension of $128$ and a residual flows with $20$ residual blocks, where each subnetwork
has three hidden layers with $128$ neurons.
The results are given in Table~\ref{tab_molecules}. 
We observe that the proximal residual flow yields better results than the large real NVP network and the residual flow.

\begin{table}
\centering
\scalebox{.9}{
\begin{tabular}{c|c|c|c|c|c}
Method& $\phi$ & $\gamma_1$ & $\psi$ & $\gamma_2$ & $\gamma_3$\\\hline
Real NVP & $0.12\pm 0.05$ & $0.14\pm 0.04$ & $0.06\pm 0.03$ & $0.04\pm 0.01$ & $0.07\pm 0.01$\\
Residual Flows & $0.12\pm 0.11$ & $0.24\pm 0.29$ & $0.10\pm 0.08$ & $0.06\pm 0.03$ & $0.07\pm 0.02$\\
Proximal Residual Flow & $0.05\pm 0.02$ & $0.06\pm 0.01$ & $0.03\pm 0.00$ & $0.05 \pm 0.01$ & $0.05\pm 0.01$
\end{tabular}
}
\caption{Empirical Kullback Leibler divergence between one-dimensional marginal distributions corresponding to the torsion angles $\phi$, $\gamma_1$, $\psi$, $\gamma_2$ and $\gamma_3$. The results are averaged
over five independent runs.}
\label{tab_molecules}
\end{table}

\subsection{Posterior Reconstruction}\label{subsec_con}

Now, we aim to find a conditional proximal residual flow for reconstructing the posterior distribution $P_{X|Y=y}$ for all $y\in\R^d$.
That is, we consider the setting from Section~\ref{sec_conditional_iResPNNs}.

\paragraph{Circle.}

\begin{figure}[t]
\centering
\begin{subfigure}[c]{0.2\textwidth}
\includegraphics[width=\textwidth]{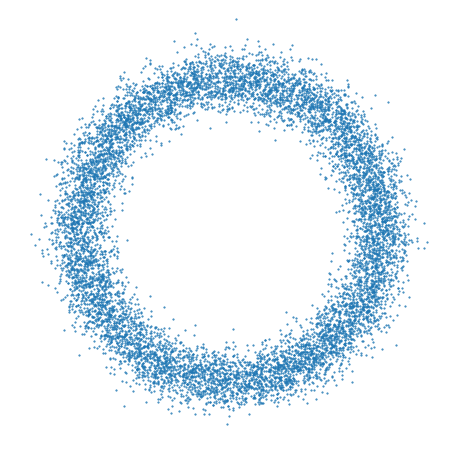}
\end{subfigure}
\begin{subfigure}[c]{0.7\textwidth}
\begin{subfigure}[t]{0.2\textwidth}
\includegraphics[width=\textwidth]{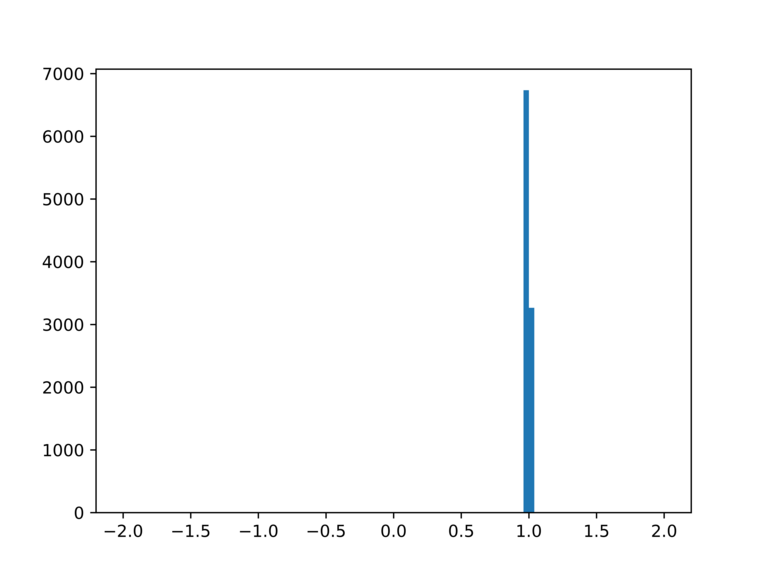}
\end{subfigure}\hfill
\begin{subfigure}[t]{0.2\textwidth}
\includegraphics[width=\textwidth]{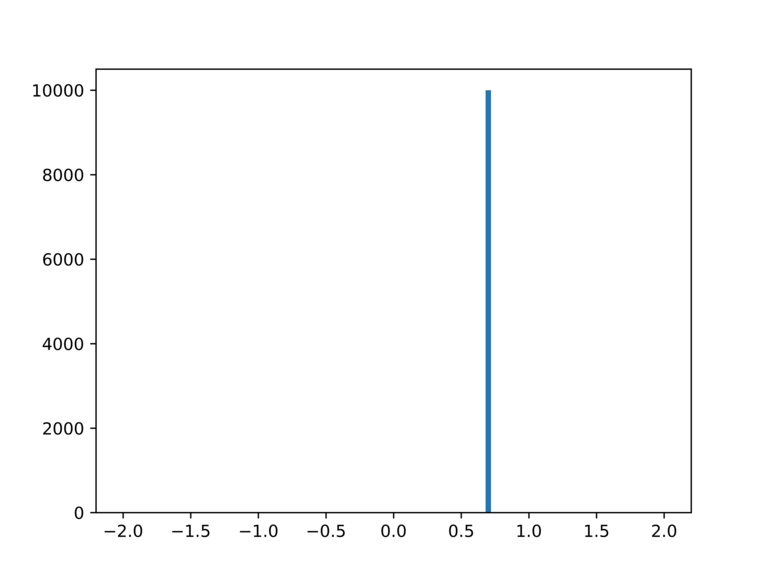}
\end{subfigure}\hfill
\begin{subfigure}[t]{0.2\textwidth}
\includegraphics[width=\textwidth]{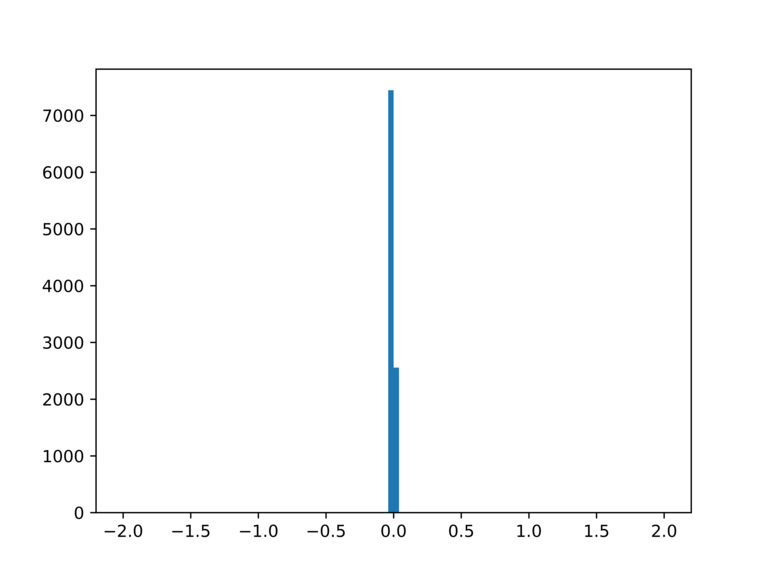}
\end{subfigure}\hfill
\begin{subfigure}[t]{0.2\textwidth}
\includegraphics[width=\textwidth]{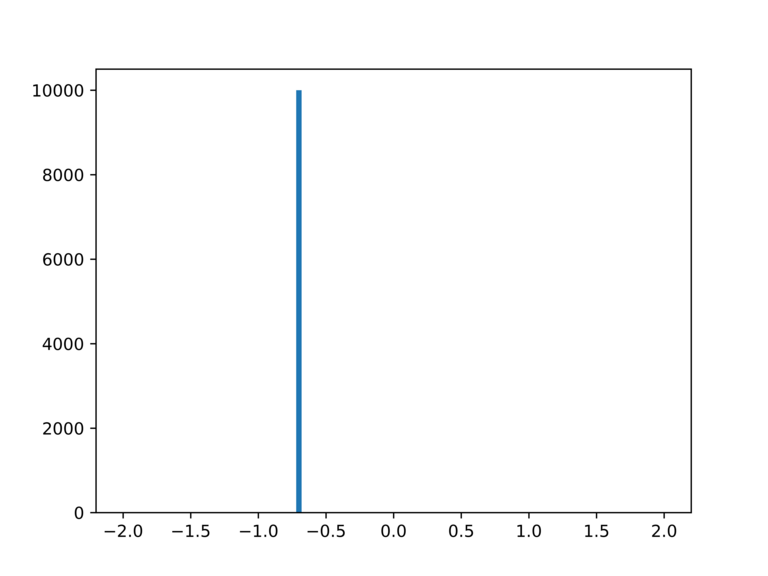}
\end{subfigure}\hfill
\begin{subfigure}[t]{0.2\textwidth}
\includegraphics[width=\textwidth]{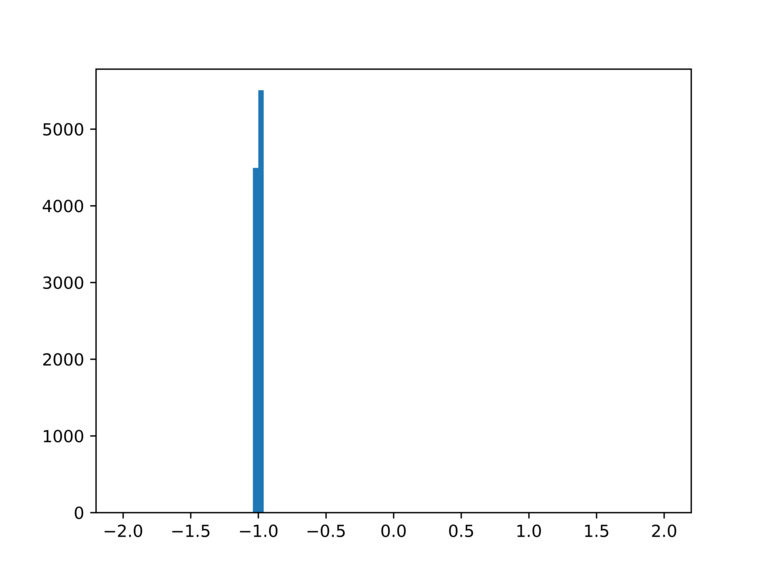}
\end{subfigure}
\begin{subfigure}[t]{0.2\textwidth}
\includegraphics[width=\textwidth]{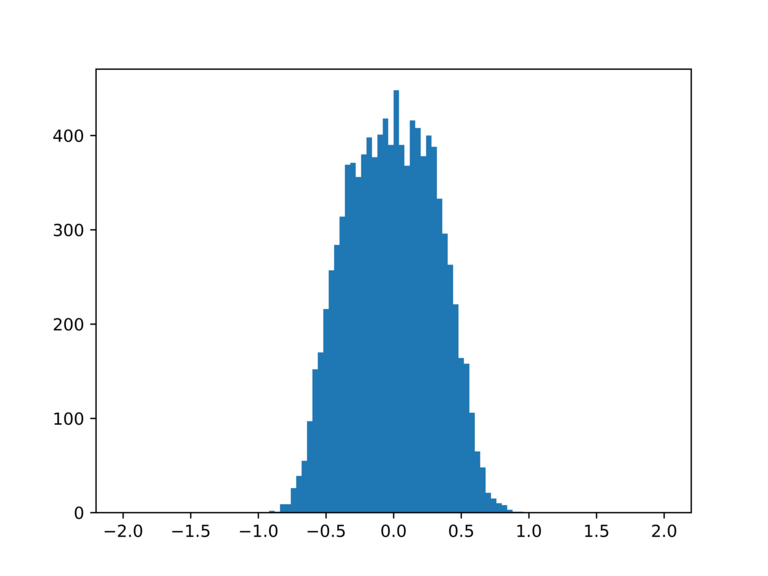}
\caption*{$y=1.0$}
\end{subfigure}\hfill
\begin{subfigure}[t]{0.2\textwidth}
\includegraphics[width=\textwidth]{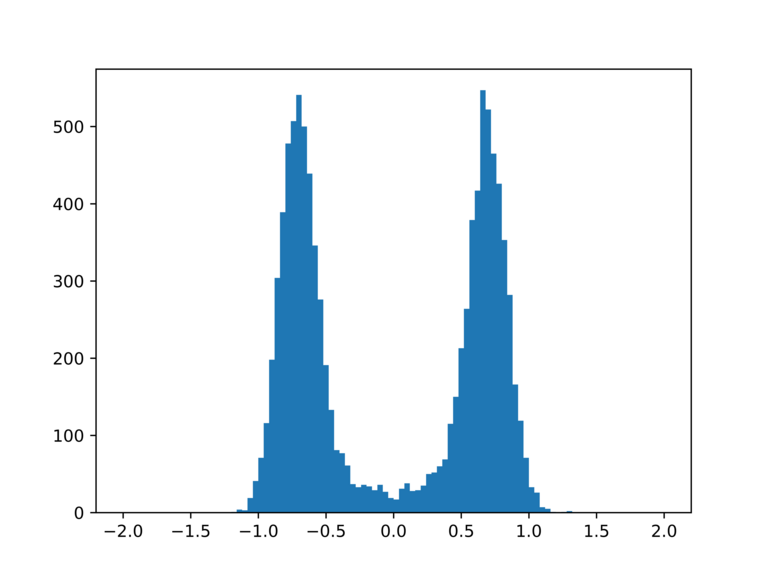}
\caption*{$y=0.7$}
\end{subfigure}\hfill
\begin{subfigure}[t]{0.2\textwidth}
\includegraphics[width=\textwidth]{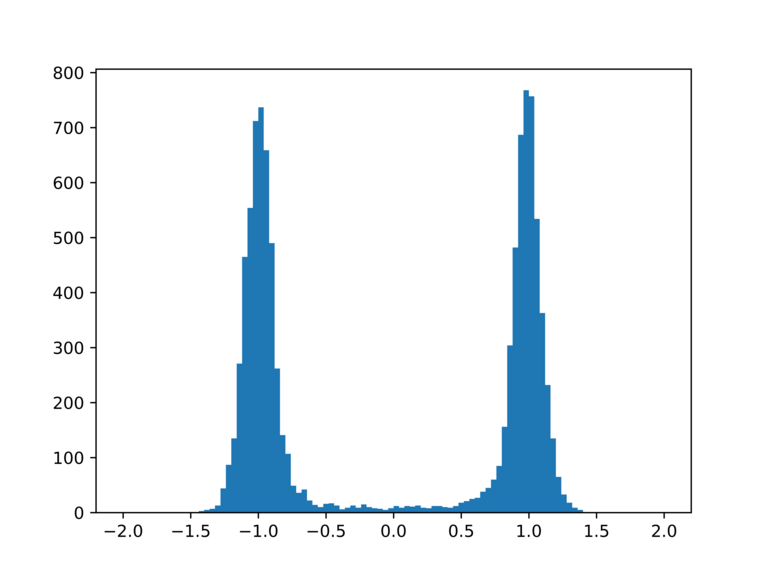}
\caption*{$y=0$}
\end{subfigure}\hfill
\begin{subfigure}[t]{0.2\textwidth}
\includegraphics[width=\textwidth]{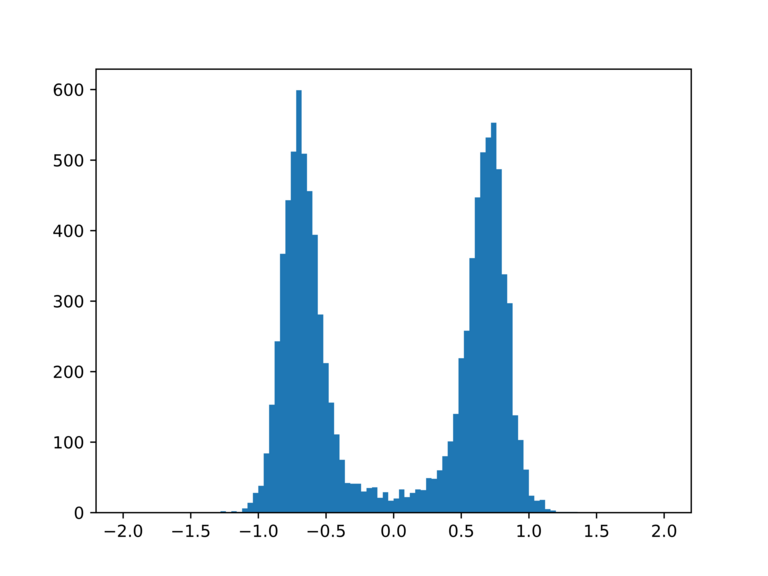}
\caption*{$y=-0.7$}
\end{subfigure}\hfill
\begin{subfigure}[t]{0.2\textwidth}
\includegraphics[width=\textwidth]{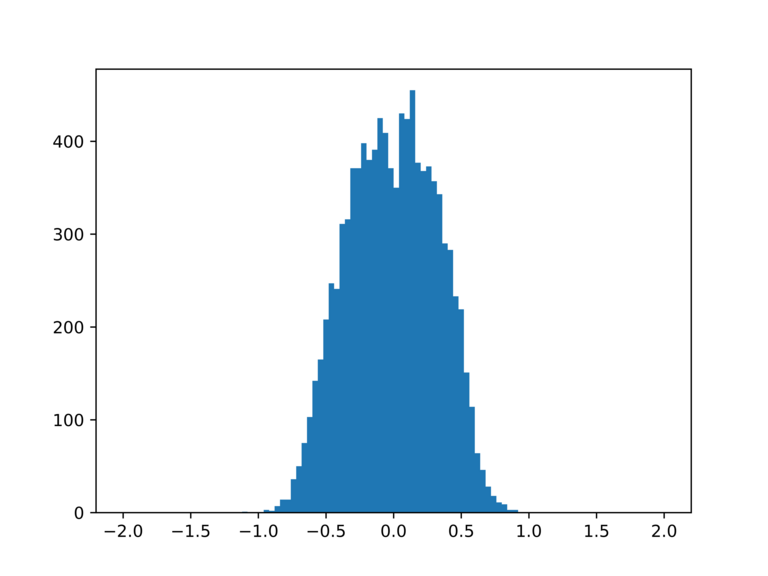}
\caption*{$y=-1.0$}
\end{subfigure}
\end{subfigure}
\caption{Left: Samples from the prior distribution of $X$ for the circle example.
Right: Histograms of samples from the reconstructed posterior distribution 
$P_{X|Y=y}\approx \mathcal T(\cdot,y)^{-1}_\#P_Z$ for $y\in\{1,0.7,0,-0.7,-1\}$ within the circle example.
Top: first coordinate, Bottom: second coordinate.}
\label{fig_circle}
\end{figure}

First, we consider the inverse problem \eqref{eq_inv_prob} specified as follows.
Let the prior distribution $P_X$ be the convolution of uniform distribution on the unit circle in $\R^2$ with
the normal distribution $\epsilon\sim\N(0,0.1^2 I_2)$. Samples of $P_X$ are illustrated on the left of Figure~\ref{fig_circle}.
Further let the operator $F\colon\R^2\to\R$ be given by $y\coloneqq F(x_1,x_2)=x_1$ and define the noise distribution by $\eta\sim\mathcal N(0,0.02^2 I)$.

Now, we train a conditional proximal residual flow $\mathcal T(x,y)$ such that it holds approximately
$$
P_{X|Y=y}\approx \mathcal T(\cdot,y)^{-1}_\#P_Z
$$
The right side of Figure~\ref{fig_circle} shows histograms of samples from the reconstructed posterior distribution 
$\mathcal T(\cdot,y)^{-1}_\#P_Z$ for $y\in\{1,0.7,0,-0.7,-1\}$.
As expected the estimation of $P_{X|Y=y}$ is unimodal for $y\in\{1,-1\}$ and bimodal for $y\in\{0.7,0,-0.7\}$.

\paragraph{Scatterometry.}

Next, we apply proximal residual flows to a Bayesian inverse problem in
scatterometry with
a nonlinear forward operator $F\colon\R^3\to\R^{23}$.
It describes the diffraction of monochromatic lights on line gratings
which is a non-destructive technique to determine the structures of photo masks.
For a detailed description, we refer to \cite{HGB2015,HGB2018}.
We use the code of \cite{HHS2021}\footnote{available at \url{https://github.com/PaulLyonel/conditionalSNF}.} 
for the data generation, evaluation and the representation of the forward operator.

As no prior information about the parameters $x$ is given, we choose the prior distribution
$P_X$ to be the uniform distribution on $[-1,1]^3$.
Since we assume for normalizing flows that $P_X$ has a strictly positive density $p_X$,
 we relax the probability density function of the uniform distribution
for $x=(x_1,x_2,x_3)\in\R^3$ by
$$
p_X(x)\coloneqq q(x_1)q(x_2)q(x_3), \quad q(x)\coloneqq \begin{cases}\frac{\alpha}{2\alpha+2}\exp(-\alpha(-1-x)),&$for $x<-1,\\
\frac{\alpha}{2\alpha+2},&$for $x\in[-1,1],\\\frac{\alpha}{2\alpha+2}\exp(-\alpha(x-1)),&$for $x>1,\end{cases}
$$
where $\alpha\gg 0$ is some constant. 
Note that for large $\alpha$ and $x_i$ outside of $[-1,1]$ the function $q$ becomes small such that $p_X$ 
is small outside of $[-1,1]^3$.
In our numerical experiments, $\alpha$ is set to $\alpha=1000$.

We compare the proximal residual flow with the normalizing flow with real NVP architecture from \cite{HHS2021}
and with a residual flow of $20$ residual blocks, where each subnetwork
has three hidden layers with $128$ neurons.
As a quality measure, we use the empirical KL divergence of $P_{X|Y=y}$ and $\mathcal T(y,\cdot)^{-1}_\# P_Z$ 
for $100$ independent samples  with $540000$ samples on a $75\times75\times75$ grid.
As a ground truth, we use samples from $P_{X|Y=y}$ which are generated by the Metropolis Hastings algorithm, see \cite{HHS2021}.
The average empirical KL divergences of the reconstructed posterior distributions over $100$ observations are given in Table~\ref{tab:scatter}.
The proximal residual flow gives the best reconstructions.
\begin{table}
\centering
\begin{tabular}{c|c|c|c}
&Real NVP&Residual Flows&Proximal Residual Flows\\\hline
KL & $0.773\pm 0.289$&$0.913\pm 0.407$&$0.637\pm0.263$
\end{tabular}
\caption{Average empirical KL divergence of the different reconstructions for the scatterometry example.}
\label{tab:scatter}
\end{table}

\paragraph{Mixture models.}
Next, we consider the Bayesian inverse problem \eqref{eq_inv_prob}, where the forward operator $F\colon\R^{50}\to\R^{50}$
is linear and given by the diagonal matrix $A\coloneqq 0.1\,\mathrm{diag}\bigl((\tfrac1n)_{n=1}^{50}\bigr)$.
Moreover, we add Gaussian noise with standard deviation $0.05$.
As prior distribution $P_X$, we choose a Gaussian mixture model with $5$ components, where we draw the means uniformly from $[-1,1]^{50}$
and set the covariances to $0.01^2\,I$.
Note that in this setting, the posterior distribution can be computed analytically, see \cite[Lem.~6.1]{HHS2021}.

We compare our results with a normalizing flow consisting of $20$ real NVP blocks with subnetworks consisting of $3$
fully connected layers and a hidden dimension of $128$ and a residual flow with $20$ residual blocks, where each subnetwork
has three hidden layers with $128$ neurons.
Since the evaluation of the empirical KL divergence is intractable in high dimensions, we use the empirical Wasserstein distance 
as an error measure. The results are given in Table~\ref{tab:mix}.
We observe, that the proximal residual flows outperforms both comparing methods significantly.

\begin{table}
\centering
\begin{tabular}{c|c|c|c}
&Real NVP&Residual Flows&Proximal Residual Flows\\\hline
Wasserstein-$2$ distance &$2.122\pm 1.007$& $1.374\pm 0.050$&$1.028\pm 0.079$
\end{tabular}
\caption{Average Wasserstein-$2$ distance for the reconstructed posteriors to the ground truth over $100$ observations.}
\label{tab:mix}
\end{table}

\section{Conclusions}\label{sec_conc}

We introduced proximal residual flows, which improve the expressiveness of residual flows by the use of proximal neural networks.
In particular, we proved that proximal residual flows are invertible, even though the Lipschitz constant of the subnetworks
is larger than one.
Afterwards, we extended the framework of proximal residual flows to the problem of posterior reconstruction within Bayesian inverse
problems by using conditional generative modelling.
Finally, we demonstrated the performance of proximal residual flows by numerical examples.

This work can be extended in several directions.
First, it is an open question, how to generalize the training procedure in this paper to convolutional networks.
In particular, finding an efficient algorithm which computes the orthogonal projection onto the space of
orthogonal convolutions with limited filter length is left for future research.
Moreover, every invertible neural network architecture requires an exploding Lipschitz constant for reconstructing
multimodal \cite{HN2021,SDDD2022} or heavy tailed distributions \cite{JKYB2020}.
To overcome these topological constraints, the authors of \cite{AMD2021,HHS2021,MARD2022,WKN2020} 
propose to combine normalizing flows with stochastic sampling methods.
Finally, we could improve the expressiveness of proximal residual flows by combining 
them with other generative models, see e.g.~\cite{HHS2021generalized}.

\subsubsection*{Acknowledgements} 
Funding by the German Research Foundation (DFG) within the project STE 571/15-1 is gratefully acknowledged.

\bibliographystyle{abbrv}
\bibliography{references}

\end{document}